\documentclass[journal]{IEEEtran}

\usepackage{cite}
\usepackage{amsthm}
\usepackage{amsmath,amssymb,amsfonts}
\usepackage{algorithmic}
\usepackage{algorithm}
\usepackage{graphicx}
\usepackage{booktabs}
\usepackage{multirow}
\usepackage{array}
\usepackage{tabularx}
\usepackage{subcaption}
\usepackage{orcidlink}
\usepackage{url}
\usepackage{xcolor}
\usepackage[normalem]{ulem}
\usepackage{tikz}
\usepackage{pgfplots}
\pgfplotsset{compat=1.18}
\usetikzlibrary{positioning,arrows.meta,fit,backgrounds,calc}
\newcommand{\revise}[1]{\textcolor{black}{#1}}
\newtheorem{proposition}{Proposition}

\begin{document}

\title{SpikingNav: Robust Embodied Navigation with Spiking Neural Policies}

\author{Jiahong~Zhang \orcidlink{0000-0002-5687-1839}\IEEEauthorrefmark{2},
        Sijun~Shen \orcidlink{0009-0009-0654-6358}\IEEEauthorrefmark{2},
        Dehua~Wu \orcidlink{0000-0003-1743-1790},
        Yifan~Lin \orcidlink{0009-0007-9915-039X},
        Xuechen~Xia \orcidlink{0009-0004-0587-3502},
        % Dehua~Wu ,
        Xu~Chu \orcidlink{0009-0003-4659-5599}, 
        Youhui~Zhang \orcidlink{0000-0003-2333-3580},
        % Han Xu \orcidlink{0000-0003-1411-3092},
        % Yonghong~Tian,~\IEEEmembership{Fellow,~IEEE},
        % Bo~Xu \orcidlink{0000-0002-1111-1529},        
        and~Guoqi~Li \orcidlink{0000-0002-8994-431X}\IEEEauthorrefmark{1}%

\vspace{-1em}
\thanks{\IEEEauthorrefmark{2}Jiahong Zhang and Sijun Shen contributed equally to this work.}%
\thanks{Jiahong Zhang, Yifan Lin, Xuechen Xia, and Guoqi Li are with the Institute of Automation, Chinese Academy of Sciences, Beijing 100045, China, and with the School of Artificial Intelligence, University of Chinese Academy of Sciences, Beijing 100049, China.}%
\thanks{Sijun Shen is with the State Key Laboratory of Media Convergence and Communication, Communication University of China, Beijing 100024, China.}%
% \thanks{Yonghong Tian is with Peking University, Beijing 100871, China, and with Peng Cheng Laboratory, Shenzhen 518066, China.}%
\thanks{ Dehua Wu, Xu Chu and Youhui Zhang are with the Department of Computer Science and Technology, Tsinghua University, Beijing 100084, China. }
\thanks{Youhui Zhang is also with the Beijing National Research Center for Information Science and Technology, Tsinghua University, Beijing 100084, China.}
\thanks{\IEEEauthorrefmark{1}Corresponding author: Guoqi Li.}}

\markboth{IEEE Transactions on Industrial Cyber-Physical Systems,~Vol.~XX, No.~XX, Month~2026}%
{Zhang \MakeLowercase{\textit{et al.}}: SpikingNav: Robust Embodied Navigation with Spiking Neural Policies}

\maketitle

\begin{abstract}
Embodied navigation requires an agent to make sequential decisions from {egocentric observations} \revise{in a physical environment}. Existing Artificial Neural Network (ANN)-based navigation models have achieved strong performance, yet they often rely on dense computation and may degrade under visual corruptions. Spiking neural networks (SNNs) provide event-driven computation and intrinsic temporal dynamics, which are promising for compact and robust navigation \revise{on resource-constrained platforms}. However, whether spike-based sensing and policy dynamics can improve robustness in visually rich embodied navigation remains an open problem. This paper proposes \textbf{SpikingNav}, a spiking framework for robust indoor embodied navigation. It contains a Spiking Sensing Encoder (SSE) and a Spiking Policy Network (SPN). The SSE extracts task-conditioned visual features with a spike-based backbone. The SPN maintains a recurrent policy state through membrane integration, thresholding, and spike-triggered reset. In this way, SpikingNav exploits the dynamic properties and spike activations of SNNs to improve navigation performance and robustness.
We evaluate SpikingNav on PointNav and ObjectNav under clean observations and visual corruptions. SpikingNav achieves competitive clean performance and stronger robustness with fewer parameters and lower per-step computation than a matched ANN baseline. For instance, SpikingNav improves ObjectNav success from 31.05\% to 34.12\%, and raises the average success under visual corruptions from {8.45\% to 13.71\%}, demonstrating the benefits of spike-based sensing and policy dynamics. 
We further validate the deployability of our spike-based sensing method on the Thruster--V2 neuromorphic chip. \revise{This physical hardware validation shows that SpikingNav can be instantiated on a real neuromorphic substrate for cyber-physical systems.}
\end{abstract}

\begin{IEEEkeywords}
Embodied navigation, robustness, spiking neural networks, temporal decision making, visual navigation, \revise{neuromorphic hardware}.
\end{IEEEkeywords}

\section{Introduction}

Embodied navigation is a core capability for \revise{industrial cyber-physical agents} that must perceive, reason, and act in visually complex environments. Unlike static visual recognition and detection problems, navigation requires continuous decision making from streaming observations, where the agent must remain effective under sensor noise and scene changes~\cite{savva2019habitat,wu2024embodied}. These requirements make embodied navigation not only a perception problem, but also a sequential control problem for \revise{cyber-physical} systems deployed in the physical world. 

Recent progress in embodied AI has been largely driven by artificial neural networks (ANNs). Strong visual backbones and reinforcement learning (RL) frameworks have enabled impressive performance on tasks such as PointNav and ObjectNav \cite{chaplot2020object, batra2020objectnav}.
However, this progress also exposes an important practical limitation: high navigation performance often depends on dense feature extraction and computationally heavy policy networks, which are not always suitable for compact embodied platforms~\cite{nwm,zhang2025embodied,gao2026drive}. More importantly, embodied agents can be sensitive to corruptions and deployment-time disturbances, such as blur and noise~\cite{robustnav}. As a result, improving robustness under realistic perturbations while keeping computation affordable remains an open problem.

Spiking neural networks (SNNs) provide a promising direction for addressing this challenge, as they combine event-driven computation with intrinsic temporal dynamics~\cite{maass1997networks,stbp,fang2023spikingjelly}. These properties suggest possible advantages for embodied navigation, where both efficiency and temporally robust decision-making are important. Despite recent progress in SNN architectures and training methods~\cite{wu2019direct,fang2021sew,yao2024spikformer,mpsn}, however, most strong empirical results have still been reported on frame-level perception tasks~\cite{su2023deep,ren2026language}. In such settings, SNNs often exhibit appealing properties such as robustness and low-energy event-driven computation, yet these advantages do not consistently translate into superior performance over mature ANN backbones because of spike discretization and optimization difficulty~\cite{sharmin2020inherent,wu2024rsc}. It remains unclear whether these trade-offs persist in embodied navigation, where the challenge lies not only in semantic discrimination but also in stable temporal information integration under varying sensory inputs.

To investigate this problem, we present a spiking visual navigation framework, termed \textbf{SpikingNav}, which extends SNNs from perceptual encoding to the navigation policy core \revise{for compact physical navigation systems}. SpikingNav consists of two spike-based components. The \textbf{Spiking Sensing Encoder (SSE)} extracts visual representations with a spike-based ResNet-style backbone, encodes the task-specific goal representation, and fuses the two streams into a compact task-conditioned feature. The \textbf{Spiking Policy Network (SPN)} then updates the policy representation through membrane-based spiking recurrent dynamics before producing the actor and critic outputs. This design is motivated by the sequential nature of navigation, where reliable action selection depends not only on instantaneous visual evidence but also on temporally consistent state estimation. In this context, membrane leakage, thresholding, and reset of spiking neurons help suppress transient visual fluctuations before they propagate into the policy state.

Compared with prior SNN-based navigation studies mainly focused on mapless control, spatial coding, or hierarchical motion generation~\cite{yang2023memorynav,chai2025braininspired,yang2025hsrl}, our study considers a broader and more challenging setting by jointly evaluating PointNav and ObjectNav in visually rich indoor environments~\cite{deitke2020robothor}. Within this setting, we examine whether an SNN-based navigator can achieve competitive performance relative to an ANN navigator of similar model scale, and whether the spiking policy degrades more gracefully under visual corruptions. \revise{We further connect this algorithmic evaluation to a real neuromorphic processor on Thruster--V2\textsuperscript{\ref{foot:gaban_name}}, thereby grounding the proposed navigation pipeline in a physical computing system.}

Extensive experiments show that SpikingNav achieves high performance under clean settings while exhibiting stronger robustness. Compared with ANNNav, SpikingNav uses fewer parameters and lower per-step FLOPs (12.1M/0.97G vs. 14.0M/4.21G), improves clean ObjectNav success rate from {31.05\% to 34.12\%}, and {raises the average corrupted ObjectNav success rate from 8.45\% to 13.71\%}. 
Taken together, these results suggest that the practical value of SNNs in embodied systems lies in enabling compact and disturbance-tolerant sensing and policy modules for navigation. 
% We further conduct a module-level deployment of the SSE on the GaBAN V2 neuromorphic chip. The on-chip execution characteristics of spike-based visual perception and target fusion are evaluated in terms of execution cycles, dynamic current, and dynamic energy. 
The main contributions of this work are summarized as follows:

\begin{enumerate}
\item We propose SpikingNav, a spiking navigation framework that integrates an SSE and an SPN into a standard actor-critic RL pipeline, extending SNNs from perceptual encoding to policy learning for visually rich indoor navigation.

\item We develop the SPN as a compact recurrent policy core that converts sensing features into membrane states and updates them through spiking neural dynamics. By regulating the policy state through spike-triggered reset, the SPN supports stable action generation under corrupted observations.

\item Experimental results show that SpikingNav improves robustness under visual corruptions while maintaining competitive clean-task performance and lower model cost than the ANN baseline.

\item We implement the SSE on the Thruster--V2 neuromorphic chip and evaluate its on-chip execution characteristics, including cycles, dynamic current, and dynamic energy, providing an initial hardware-level validation of spike-based visual perception and target fusion \revise{on a physical neuromorphic computing system}.

\end{enumerate}

\section{Related Work}

\subsection{Spiking Neural Networks}

SNNs are widely regarded as the third generation of neural models due to their utilization of temporally structured spikes rather than continuous activations~\cite{maass1997networks}. Despite this biological plausibility, SNNs traditionally face optimization challenges owing to the non-differentiable nature of discrete activations, often resulting in a performance gap compared to Artificial Neural Networks (ANNs). Modern advances have sought to bridge this gap, driving SNNs toward ANN-level performance through innovations such as spatio-temporal backpropagation~\cite{stbp}, SLAYER~\cite{shrestha2018slayer}, direct training~\cite{wu2019direct}, and specialized normalization techniques~\cite{tdbn,tebn}. Furthermore, the development of deeper and more expressive architectures, including SEW-ResNet~\cite{fang2021sew}, MS-ResNet~\cite{hu2024advancing}, and Spikformer~\cite{yao2024spikformer,yao2025scaling}, has significantly expanded the scaling potential of spiking models. A parallel line of research investigates the intrinsic robustness and temporal modeling capabilities of SNNs, positing that spike discretization and membrane dynamics can enhance noise tolerance and sequential representation~\cite{park2021noise,rathi2020robust,kim2022rate,mukhoty2023certified}.

Despite these theoretical and architectural milestones, the majority of empirical evidence for SNNs remains confined to feedforward vision tasks, event-based classification, or object detection~\cite{su2023deep,luo2024integer}. Recent research involving spiking models also primarily targets frame-level perception rather than the real-time state estimation and action generation required for active agents~\cite{ren2026language}. Even in studies where robustness is a central theme, the prevailing benchmarks are typically restricted to visual recognition or sequence classification rather than the more complex domain of navigation policy learning~\cite{zhang2019fast,zheng2018sparse,mpsn,shen2026stage}. Our work studies SNNs within an autonomous navigation framework and demonstrates their effectiveness relative to ANNs, without requiring modifications to the standard RL task interface.

\begin{figure*}[t]
  \centering
  \includegraphics[width=\linewidth]{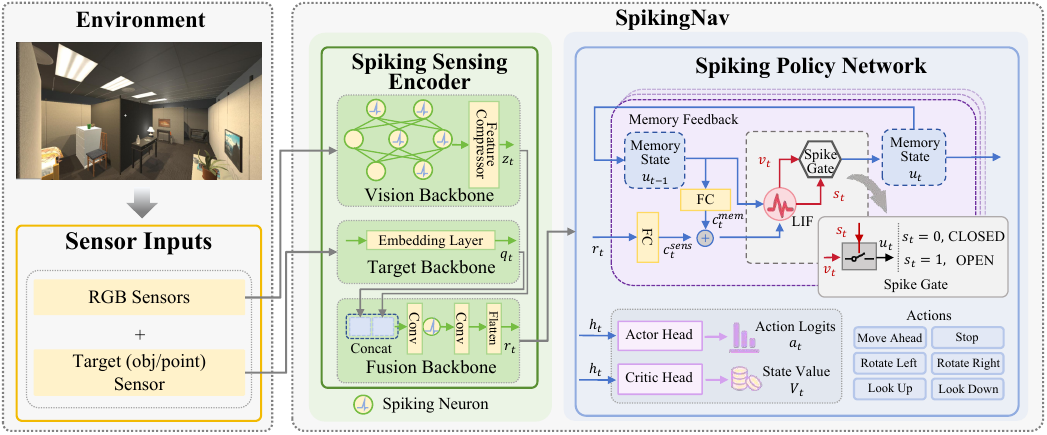}
  \caption{Overview of SpikingNav. Given the current RGB observation and a target, the Spiking Sensing Encoder (SSE) first extracts observation features through a visual backbone and maps the target into a target feature via an embedding layer. A lightweight fusion backbone then aligns and compresses the two representations to produce sensing features. The resulting feature is fed into the Spiking Policy Network (SPN), which maintains spike-driven temporal dynamics and outputs the action policy and state value while preserving the standard actor-critic RL interface.}
  \label{fig:framework}
\end{figure*}

\subsection{Embodied Visual Navigation}

Visual navigation has evolved from classical mapping-and-planning pipelines to end-to-end learned policies that process RGB, depth, and language-conditioned observations~\cite{wijmans2020ddppo,allenact2020,wu2024vln,yu2025mossvln}. Among embodied benchmarks, PointNav mainly evaluates goal-directed movement, whereas ObjectNav additionally requires semantic grounding and is therefore more sensitive to perception failures~\cite{deitke2020robothor}. Recent navigation systems typically rely on CNN or Transformer visual encoders, coupled with recurrent or hierarchical memory to integrate past observations and actions for sequential decision-making~\cite{pirlnav,ovon,mcfc,zhang2026beyond}.

SNN-based navigation has also received increasing attention~\cite{oikonomou2025reinforcement}. 
Existing studies have explored RL with spiking policies for low-level mobile robot navigation~\cite{10802416,10786920}, demonstrating the practical potential of spike-based control in mapless scenarios~\cite{9002834,yang2023memorynav,10670281}. 
Brain-inspired navigation further introduces spatial encoders based on grid cells, head-direction cells, and boundary-vector cells~\cite{chai2025braininspired}. 
Hierarchical spiking control combines a spiking high-level controller with continuous attractor dynamics to generate physically feasible robot actions~\cite{yang2025hsrl}.

Prior work suggests that spiking dynamics may support navigation by improving memory, spatial representation, and action generation. However, current evidence is mainly limited to relatively simple or mapless settings with lidar-based or other low-dimensional sensory inputs. As a result, their effectiveness in visually grounded embodied navigation under complex scene semantics remains unclear. In this work, we study SNN-based navigation in RoboTHOR~\cite{deitke2020robothor}, comparing a matched SNN-ANN pair under both clean and corrupted conditions across PointNav and ObjectNav within a unified policy framework.

\section{Method}

\subsection{Spiking Neural Network Preliminaries}

SNNs replace the continuous-valued activations in ANNs with temporally evolving neuron states and discrete spikes. A typical spiking neuron integrates its input into a membrane potential and emits a spike only when the potential crosses a threshold~\cite{maass1997networks,stbp}. This formulation can be written as:
\begin{align}
\widetilde{U}_t &= \lambda U_{t-1} + I_t, \\
S_t &= \mathcal{H}(\widetilde{U}_t - \vartheta), \\
U_t &= \widetilde{U}_t \odot (1-S_t),
\end{align}
where $\widetilde{U}_t$ is the pre-reset membrane potential, $U_t$ is the membrane potential after reset, $I_t$ is the input current, and \(S_t\in\{0,1\}^{d}\) denotes the spike output at internal spiking step \(t\), where \(t\in\{1,\ldots,T\}\). Here, \(T\) is the total number of internal spiking steps.
% where $\widetilde{U}_t$ is the pre-reset membrane potential, $U_t$ is the membrane potential after reset, $I_t$ is the input current, and $S_t\in\{0,1\}^{d}$ denotes the spike output at internal spiking step $t$. 
$\lambda$ is a leak factor, $\odot$ denotes element-wise multiplication, $\vartheta$ is the firing threshold, and $\mathcal{H}(\cdot)$ denotes the spike-generation function.
Because $\mathcal{H}(\cdot)$ is non-differentiable, modern SNNs are usually trained with surrogate-gradient methods that approximate its derivative during backpropagation~\cite{stbp}.

Compared with conventional ANNs, this computational mechanism provides two properties that are relevant to navigation. First, membrane integration allows the model to accumulate recent evidence through internal spiking dynamics, rather than responding solely to an instantaneous feature map. Second, the discrete nature of spike-based activity can provide a form of local state stability: when small perturbations do not alter the spiking pattern, their influence on the encoded state remains limited. These characteristics motivate the use of SNNs in navigation policies.

\subsection{Problem Formulation}

We consider embodied navigation as a partially observable Markov decision process. At time step $t$, the agent receives an observation $o_t$ from an egocentric visual stream and maintains an internal state $h_t$ summarizing interaction history. The policy outputs an action $a_t$ that changes both the environment state and the next observation. The task objective is to navigate successfully to a specified goal, such as a target point in PointNav or an instance of a target object category in ObjectNav.

\subsection{SpikingNav}

SpikingNav adopts a minimally disruptive design that preserves the standard embodied RL interface while introducing spiking computation only into the sensing encoder and the policy core.
Fig.~\ref{fig:framework} illustrates the overall pipeline. Given the current visual observation and a target token, SpikingNav first extracts sensing features using the SSE. Specifically, the visual observation is processed by a visual backbone to obtain observation features, while the target token is transformed into a target feature through an embedding layer. The embedding layer maps each target category identifier to a learnable feature representation. For PointNav, the goal input is a continuous goal descriptor provided by the environment and is transformed through a goal encoder. These two representations are then aligned and compressed by a lightweight fusion backbone before being fed into the SPN, which maintains a spike-driven temporal state and produces actor-critic outputs.

\subsubsection{Spiking Sensing Encoder}

As shown in Fig.~\ref{fig:framework}, the SSE consists of three components: a visual backbone for extracting observation features, a target backbone for encoding the goal input, and a lightweight fusion backbone for aligning the joint representation.

Let $x_t \in \mathbb{R}^{H \times W \times C}$ denote the current RGB or RGB-D observation at time step $t$, where $H$ and $W$ are the input spatial dimensions and $C$ depends on the enabled sensor configuration. The visual backbone consists of a spike-based ResNet18-style network~\cite{zhang2026burst} followed by a feature compressor with two convolutional layers. Given $x_t$, the encoder processes the input over $T$ internal spiking steps and produces a visual representation:
\begin{equation}
z_t = E_{\mathrm{vis}}(x_t; T), \qquad z_t \in \mathbb{R}^{H_z \times W_z \times D_z},
\end{equation}
where $H_z$, $W_z$, and $D_z$ denote the spatial height, spatial width, and channel dimension of the visual feature, respectively. Following the original model setting~\cite{zhang2026burst}, we set $T=4$. In our implementation, the feature compressor uses two convolutional layers, with intermediate and output channel dimensions of 128 and $D_z$, respectively. We set $H_z=7$, $W_z=7$, and $D_z=32$.

The target backbone encodes the task-specific goal input $g_t$ into a target representation:
\begin{equation}
% q_t = E_{\mathrm{tar}}(g_t), \qquad q_t \in \mathbb{R}^{H_z \times W_z \times D_q},
e_t = E_{\mathrm{tar}}(g_t), \qquad e_t \in \mathbb{R}^{D_q},
\end{equation}
The target vector is spatially expanded to match the visual feature map, yielding \(q_t \in \mathbb{R}^{H_z \times W_z \times D_q}\),
where $E_{\mathrm{tar}}(\cdot)$ denotes the target embedding layer and $D_q$ is the channel dimension of the target representation. In our implementation, we set $D_q = D_z$. For ObjectNav, $g_t$ corresponds to the target object token; for PointNav, it denotes the goal descriptor provided by the environment.

The visual feature $z_t$ and target feature $q_t$ are concatenated along the channel dimension and then processed by a lightweight fusion module with two convolutional layers followed by a flatten operation:
\begin{equation}
r_t = F_{\mathrm{fus}}\big([z_t; q_t]\big), \qquad r_t \in \mathbb{R}^{D_{\mathrm{in}}},
\end{equation}
where $[z_t; q_t] \in \mathbb{R}^{H_z \times W_z \times (D_z + D_q)}$, and $D_{\mathrm{in}} = H_z W_z D_z$.

\subsubsection{Spiking Policy Network}~\label{sec:spn}
The navigation policy of SpikingNav is realized by an SPN, as illustrated in Fig.~\ref{fig:framework}. 
Let $c_t^{\mathrm{sens}}$ and $c_t^{\mathrm{mem}}$ denote the sensory and recurrent currents, respectively, where $W_r$ and $W_h$ are learnable weight matrices:
\begin{align}
c_t^{\mathrm{sens}} &= W_r r_t, \\
c_t^{\mathrm{mem}} &= W_h u_{t-1}.
\end{align}
The spiking recurrent core then updates its internal state as:
\begin{align}
   v_t &= \lambda u_{t-1} + c_t^{\mathrm{sens}} + c_t^{\mathrm{mem}},  ~\label{eq:vt} \\ 
   s_t &= \mathcal{H}(v_t - \vartheta), \\
u_t &= v_t \odot (1 - s_t). ~\label{eq:ut}
\end{align}
Here, \(v_t\) denotes the pre-reset membrane potential that integrates the sensory embedding \(r_t\) and the previous memory state \(u_{t-1}\), while \(\lambda \in (0,1)\) controls the decay of past memory. The binary spike signal \(s_t\), generated by the Heaviside step function \(\mathcal{H}(\cdot)\) with threshold \(\vartheta\), further serves as a {spike gate} that determines whether the newly accumulated membrane state is preserved or reset. Specifically, when the neuron does not fire (\(s_t=0\)), the gate remains closed and the membrane potential is directly carried forward, yielding \(u_t = v_t\). When the neuron fires (\(s_t=1\)), the gate opens and resets the fired state component through \(u_t = v_t \odot (1 - s_t)\). This mechanism enables event-driven memory regulation: subthreshold information is retained for future computation, while threshold-crossing activations are selectively cleared after spike emission.  During training, the non-differentiable spike function \(\mathcal{H}(\cdot)\) is optimized using surrogate gradients. 

Together, these dynamics form a lightweight memory mechanism with implicit gating. Unlike the Spiking-GRU formulations adopted in~\cite{yang2023memorynav,yang2025hsrl}, which introduce explicit GRU-style update and reset gates into the spiking domain, our design is built on the native integration-and-fire dynamics of spiking neurons. The membrane inertia $\lambda u_{t-1}$ in Eq.~\eqref{eq:vt} provides a gradual update of latent history, while the threshold-triggered spike and reset in Eq.~\eqref{eq:ut} make state changes event-dependent. As a result, transient sensory perturbations that remain below the firing threshold have limited influence on the recurrent state.

We use the post-reset membrane state \(u_t\) as the recurrent policy state \(h_t\), that is, \(h_t := u_t\). The policy and value outputs are then defined as:
\begin{align}
\pi(a_t \mid o_{\le t}) &= \mathrm{softmax}(W_{\pi} h_t + b_{\pi}), ~\label{eq:phi} \\
V_t &= W_v h_t + b_v,
\end{align}
where \(\pi(a_t \mid o_{\le t})\) denotes the action distribution and \(V_t\) is the scalar value estimate at time step \(t\).

\subsubsection{Training Method}

SpikingNav is trained in the same embodied RL pipeline based on AllenAct~\cite{allenact2020}, from which we also adopt its corresponding hyperparameter settings, with the value loss coefficient $\alpha$ set to 0.5 and the entropy coefficient $\beta$ set to 0.01. The model is optimized with a PPO-style actor-critic objective:
\begin{equation}
\mathcal{L} = \mathcal{L}_{\text{policy}} + \alpha \mathcal{L}_{\text{value}} - \beta \,\mathrm{Ent}(\pi),
\end{equation}
where \(\mathcal{L}_{\text{policy}}\) and \(\mathcal{L}_{\text{value}}\) denote the policy and value losses, respectively, \(\mathrm{Ent}(\pi)\) is the policy entropy regularizer, and \(\alpha\) and \(\beta\) are weighting coefficients.

For the policy term, we use the PPO clipped surrogate objective:
\begin{equation}
\mathcal{L}_{\text{policy}}
= - \mathbb{E}_t \Big[
\min \big(
\eta_t(\theta)\hat{A}_t,\,
\mathrm{clip}(\eta_t(\theta),1-\epsilon,1+\epsilon)\hat{A}_t
\big)
\Big],
\end{equation}
where
\begin{equation}
\eta_t(\theta)=\frac{\pi_{\theta}(a_t \mid o_{\le t})}{\pi_{\theta_{\mathrm{old}}}(a_t \mid o_{\le t})}.
\end{equation}
Here, \(\pi_{\theta}\) and \(\pi_{\theta_{\mathrm{old}}}\) denote the current and old policies, respectively, \(\epsilon\) is the clipping threshold, and \(\hat{A}_t = \hat{R}_t - V_t\) is the advantage estimate at time step \(t\), where \(V_t\) is the critic output defined above and \(\hat{R}_t\) is the target return.

The value loss is defined as:
\begin{equation}
\mathcal{L}_{\text{value}} = \mathbb{E}_t \left[(V_t - \hat{R}_t)^2\right].
\end{equation}

\subsection{A Temporal Robustness Perspective}

We analyze the local temporal stability of the SPN dynamics introduced in Sec.~\ref{sec:spn}. This analysis provides a sufficient condition under which bounded perturbations in sensory features preserve the spike pattern of the recurrent policy core and induce a bounded change in the actor logits.

Let \((u_t,r_t,v_t,s_t)\) and \((\hat u_t,\hat r_t,\hat v_t,\hat s_t)\) denote the clean and perturbed SPN variables at navigation step \(t\), respectively. We define:
\begin{equation}
\begin{aligned}
\Delta u_t &= \hat u_t-u_t, \qquad
\Delta v_t = \hat v_t-v_t,\\
\Delta r_t &= \hat r_t-r_t, \qquad
d_t = W_r \Delta r_t.
\end{aligned}
\label{eq:aligned_delta}
\end{equation}
We follow the notation of Sec.~\ref{sec:spn}, where \((u_t,r_t,v_t,s_t)\) denote the SPN variables at navigation step \(t\). For a perturbed sensory sequence, we use hatted variables \((\hat u_t,\hat r_t,\hat v_t,\hat s_t)\) to denote the corresponding perturbed trajectory. 

\textbf{Accumulated perturbation budget.} Using the compact notation:
\begin{equation}
\mathcal{M}=\lambda I+W_h ,
\end{equation}
where $I$ denotes the identity matrix, the pre-reset membrane update in Eq.~\eqref{eq:vt} can be written as
\begin{equation}
v_t=\mathcal{M}u_{t-1}+W_r r_t .
\label{eq:compact_vt}
\end{equation}
Applying the same update to the perturbed trajectory and subtracting Eq.~\eqref{eq:compact_vt}, we obtain:
\begin{equation}
\Delta v_t
=
\mathcal{M}\Delta u_{t-1}
+
d_t .
\label{eq:delta_vt}
\end{equation}
Therefore,
\begin{equation}
\|\Delta v_t\|_\infty
\le
\|\mathcal{M}\|_\infty
\|\Delta u_{t-1}\|_\infty
+
\|d_t\|_\infty .
\label{eq:one_step_bound}
\end{equation}

Over a finite interval \(\{\tau,\ldots,t\}\), Eq.~\eqref{eq:one_step_bound} induces a worst-case accumulated perturbation budget. For any \(j\in\{\tau,\ldots,t\}\), we define:
\begin{equation}
B_{\tau,j}
=
\|\mathcal{M}\|_{\infty}^{\,j-\tau+1}
\|\Delta u_{\tau-1}\|_{\infty}
+
\sum_{k=\tau}^{j}
\|\mathcal{M}\|_{\infty}^{\,j-k}
\|d_k\|_{\infty}.
\label{eq:perturbation_budget}
\end{equation}
The first term describes the initial membrane perturbation propagated to step \(j\), while the summation term accumulates the sensory-current perturbations injected from step \(\tau\) to step \(j\). Thus, \(B_{\tau,j}\) upper-bounds the worst-case membrane perturbation accumulated over the interval.

\textbf{Spike margin.} Since the spike output is determined element-wise by the sign of \(v_t-\vartheta\), we define the local spike margin for neuron \(i\) as:
\begin{equation}
m_{t,i}=|v_{t,i}-\vartheta|,
\qquad
\bar m_t=\min_i m_{t,i}.
\label{eq:aligned_margin}
\end{equation}
Here, \(i\) indexes the neuron dimension, and \(\bar m_t\) is the minimum distance from the clean membrane potential to the shared firing threshold.

% The following propositions state the local robustness mechanism. \textbf{Proposition~1} compares the accumulated perturbation budget in Eq.~\eqref{eq:perturbation_budget} with the spike margin in Eq.~\eqref{eq:aligned_margin} to obtain a spike-pattern preservation condition. \textbf{Proposition~2} then propagates the resulting state bound to the actor logits.

The following proposition states the local robustness mechanism. It compares the accumulated perturbation budget in Eq.~\eqref{eq:perturbation_budget} with the spike margin in Eq.~\eqref{eq:aligned_margin}. When the accumulated perturbation remains smaller than the margin to the firing threshold, the spike pattern is preserved over the considered interval. We then discuss its implication for the actor logits.

\begin{proposition}[Spike-pattern preservation]
\label{pro1}
Consider a finite interval \(\{\tau,\ldots,t\}\), and let \(B_{\tau,j}\) be the perturbation budget defined in Eq.~\eqref{eq:perturbation_budget}. If:
\begin{equation}
B_{\tau,j} < \bar m_j,
\qquad
\forall j\in\{\tau,\ldots,t\},
\label{eq:spike_preservation_condition}
\end{equation}
then the perturbed trajectory preserves the same spike pattern as the clean trajectory over this interval, i.e.,
\begin{equation}
\hat s_j=s_j,
\qquad
\forall j\in\{\tau,\ldots,t\}.
\end{equation}
\end{proposition}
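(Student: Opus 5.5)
The plan is a forward induction over \(j\in\{\tau,\ldots,t\}\) on a joint claim: \(\|\Delta v_j\|_\infty\le B_{\tau,j}\), and hence \(\hat s_j=s_j\). The budget in Eq.~\eqref{eq:perturbation_budget} satisfies the recursion
\begin{equation*}
B_{\tau,\tau}=\|\mathcal{M}\|_\infty\|\Delta u_{\tau-1}\|_\infty+\|d_\tau\|_\infty,\qquad
B_{\tau,j+1}=\|\mathcal{M}\|_\infty B_{\tau,j}+\|d_{j+1}\|_\infty .
\end{equation*}
This follows by pulling one factor \(\|\mathcal{M}\|_\infty\) out of both terms of the definition. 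So it suffices to show that the one-step bound Eq.~\eqref{eq:one_step_bound} can be chained, which requires \(\|\Delta u_j\|_\infty\le\|\Delta v_j\|_\infty\) at each step.

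\textbf{Step 1 (membrane to spike).} Suppose \(\|\Delta v_j\|_\infty\le B_{\tau,j}<\bar m_j\). For each neuron \(i\) we then have \(|\hat v_{j,i}-v_{j,i}|<m_{j,i}=|v_{j,i}-\vartheta|\). Hence \(\hat v_{j,i}-\vartheta\) and \(v_{j,i}-\vartheta\) are nonzero and have the same sign. Because the inequality is strict, \(m_{j,i}>0\), so neither potential sits exactly at threshold and the convention for \(\mathcal{H}(0)\) plays no role. It follows that \(\hat s_{j,i}=\mathcal{H}(\hat v_{j,i}-\vartheta)=\mathcal{H}(v_{j,i}-\vartheta)=s_{j,i}\).

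\textbf{Step 2 (spike to reset state).} With \(\hat s_j=s_j\), Eq.~\eqref{eq:ut} gives \(\Delta u_j=\hat v_j\odot(1-s_j)-v_j\odot(1-s_j)=\Delta v_j\odot(1-s_j)\). Each entry of \(1-s_j\) lies in \(\{0,1\}\), so \(\|\Delta u_j\|_\infty\le\|\Delta v_j\|_\infty\le B_{\tau,j}\). This is the key contraction: a shared spike pattern turns the reset into a common coordinate mask, which can only zero out perturbation components and never amplify them.

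\textbf{Step 3 (induction).} For the base case \(j=\tau\), Eq.~\eqref{eq:one_step_bound} gives \(\|\Delta v_\tau\|_\infty\le\|\mathcal{M}\|_\infty\|\Delta u_{\tau-1}\|_\infty+\|d_\tau\|_\infty=B_{\tau,\tau}\). Steps 1 and 2 then yield \(\hat s_\tau=s_\tau\) and \(\|\Delta u_\tau\|_\infty\le B_{\tau,\tau}\). For the inductive step, Eq.~\eqref{eq:one_step_bound} at \(j+1\) gives \(\|\Delta v_{j+1}\|_\infty\le\|\mathcal{M}\|_\infty B_{\tau,j}+\|d_{j+1}\|_\infty=B_{\tau,j+1}\), and condition Eq.~\eqref{eq:spike_preservation_condition} at \(j+1\) lets us reapply Steps 1 and 2.

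The main obstacle is Step 2. Without spike agreement, the reset is discontinuous: a single flipped neuron changes \(\Delta u_{j,i}\) by roughly \(\vartheta\) regardless of how small \(\Delta v_j\) is, and the recursion breaks. The induction must therefore carry spike equality and the state bound together, and this is why the hypothesis is imposed at every \(j\) in the interval rather than only at the endpoint.
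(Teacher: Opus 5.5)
Your proof is correct and uses the same margin-versus-budget comparison as the paper: once \(\|\Delta v_j\|_\infty\le B_{\tau,j}<\bar m_j\le|v_{j,i}-\vartheta|\), no neuron can cross the threshold, so \(\hat s_j=s_j\). The one real difference is that the paper simply asserts \(\|\Delta v_j\|_\infty\le B_{\tau,j}\) from the definition of the budget, which silently needs \(\|\Delta u_{k}\|_\infty\le\|\Delta v_{k}\|_\infty\) at earlier steps and hence spike agreement there (a fact the paper states only after the proposition); your joint induction through the common reset mask \(\Delta u_j=\Delta v_j\odot(1-s_j)\) supplies exactly this missing justification, so your version is the more complete one.
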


\begin{proof}
From Eq.~\eqref{eq:perturbation_budget}, the membrane perturbation at step \(j\) is bounded by:
\begin{equation}
\|\Delta v_j\|_\infty \le B_{\tau,j}.
\end{equation}
Under the condition \(B_{\tau,j}<\bar m_j\), we have:
\begin{equation}
\|\Delta v_j\|_\infty < \bar m_j
\le |v_{j,i}-\vartheta|,
\qquad \forall i .
\end{equation}
Therefore, for each neuron \(i\), the perturbation cannot move the membrane potential across the firing threshold \(\vartheta\). Hence,
\begin{equation}
\mathrm{sign}(\hat v_{j,i}-\vartheta)
=
\mathrm{sign}(v_{j,i}-\vartheta),
\qquad \forall i .
\end{equation}
Since the spike output is determined by this threshold comparison, we obtain:
\begin{equation}
\hat s_j=s_j,
\qquad
\forall j\in\{\tau,\ldots,t\}.
\end{equation}
This completes the proof.
\end{proof}

% \textbf{Implication for policy-logit stability.} Proposition~\ref{pro1} implies that, within the interval \(\{\tau,\ldots,t\}\), the clean and perturbed trajectories share the same spike pattern. Therefore, the reset operation does not introduce additional spike-state mismatch. For hard reset, the perturbation is either inherited from the pre-reset membrane potential or set to zero. For soft reset, the identical threshold subtraction cancels out. Thus, the post-reset perturbation is bounded by the pre-reset perturbation: \begin{equation} \|\Delta u_j\|_\infty \le \|\Delta v_j\|_\infty \le B_{\tau,j}, \qquad \forall j\in\{\tau,\ldots,t\}. \label{eq:state_bound} \end{equation} Since the recurrent policy state is \(h_j:=u_j\), we have \begin{equation} \|\Delta h_j\|_\infty = \|\Delta u_j\|_\infty \le B_{\tau,j}. \end{equation} For the policy logits \begin{equation} \ell_j = W_{\pi}h_j+b_{\pi}, \end{equation} the perturbation satisfies \begin{equation} \|\hat \ell_j-\ell_j\|_\infty = \|W_{\pi}(\hat h_j-h_j)\|_\infty \le \|W_{\pi}\|_\infty \|\Delta h_j\|_\infty \le \|W_{\pi}\|_\infty B_{\tau,j}. \label{eq:policy_logit_bound} \end{equation} This shows that spike-pattern preservation further limits the actor-logit variation through the accumulated perturbation budget \(B_{\tau,j}\).

\textbf{Implication for actor-logit variation.}
The spike-pattern preservation condition also bounds the variation of the recurrent policy state. Since the clean and perturbed trajectories share the same spike pattern over \(\{\tau,\ldots,t\}\), the reset operation is applied in the same way to both trajectories. Hence, it does not amplify the membrane perturbation, yielding:
\begin{equation}
\begin{aligned}
\|\Delta h_j\|_\infty
&=
\|\Delta u_j\|_\infty  \\
&\le
\|\Delta v_j\|_\infty
\le
B_{\tau,j},
\qquad
\forall j\in\{\tau,\ldots,t\}.
\end{aligned}
\label{eq:state_bound}
\end{equation}
For the policy logits:
\begin{equation}
\ell_j = W_{\pi}h_j+b_{\pi},
\end{equation}
we therefore obtain:
\begin{equation}
\begin{aligned}
\|\hat \ell_j-\ell_j\|_\infty
&=
\|W_{\pi}(\hat h_j-h_j)\|_\infty  \\
&\le
\|W_{\pi}\|_\infty
\|\Delta h_j\|_\infty  \\
&\le
\|W_{\pi}\|_\infty B_{\tau,j}.
\end{aligned}
\label{eq:policy_logit_bound}
\end{equation}
Thus, once the perturbation budget remains below the spike margin, the actor-logit variation is controlled by the same accumulated budget \(B_{\tau,j}\), scaled by the policy-head norm.

\section{Experiments}

\subsection{Dataset and Evaluation Metric}

The experiments are conducted in RoboTHOR~\cite{deitke2020robothor}. We consider two tasks, namely {PointNav} and {ObjectNav}. In PointNav, the agent is required to reach a target position specified relative to its current pose, which mainly evaluates goal-conditioned navigation with explicit geometric guidance. In ObjectNav, the agent must navigate to an instance of a target object category from egocentric observations, making the problem more challenging due to the additional need for semantic grounding and target search. Detailed experimental setups, evaluation protocols, and full training configurations are provided in Supplementary Materials S1.

We report {Success Rate (SR)} and {Success weighted by Path Length (SPL)} as the main evaluation metrics. SR measures the fraction of successful episodes, whereas SPL further accounts for navigation efficiency by comparing the executed trajectory with the shortest feasible path. These two metrics are reported for both clean and perturbed settings so that robustness can be assessed in terms of both task completion and path quality.

Robustness is evaluated under the visual corruptions in RobustNav~\cite{robustnav}. The visual corruptions include low lighting, motion blur, camera crack, defocus blur, speckle noise, reduced field of view (lower FOV), and spatter, which degrade the quality of egocentric observations. \revise{These corruptions emulate common physical-system disturbances, testing whether the policy remains stable when the physical sensing is degraded.}

\subsection{Navigation Performance, Model Scale, and Efficiency}

Table~\ref{tab:full_results} compares the performance of SpikingNav and ANN navigation methods on different navigation tasks. On PointNav, SpikingNav can match strong ANN performance on this standard goal-conditioned navigation task.
For the more challenging {ObjectNav} task, recent high-performing ANN-based ObjectNav systems often rely on pretrained vision-language models such as CLIP~\cite{khandelwalSimpleEffectiveCLIP2022}, or even external large language models~\cite{yin2024sg,wen2025zero}, to enhance semantic perception and decision making. Although effective, these designs are considerably less suitable for edge deployment due to their model size and computational overhead. In contrast, our comparison focuses on compact models under a matched parameter budget. Under this setting, SpikingNav achieves better ObjectNav performance than its ANN counterpart RobustNav~\cite{robustnav} and SNN counterpart LIF+GRU, suggesting that spiking policies offer a more favorable solution when strong embodied navigation performance must be obtained with small-scale models.

To enable a fair comparison, we re-implement the ANN pipeline of RobustNav~\cite{robustnav} using the same data processing, training protocol, and evaluation setting as SpikingNav, and denote this model as {ANNNav}. The parameter count, computational cost (FLOPs), and task performance of ANNNav and SpikingNav are reported in Table~\ref{tab:complexity}. Under this matched setting, SpikingNav is competitive on PointNav and outperforms ANNNav on ObjectNav while using fewer parameters and substantially fewer FLOPs, indicating that SNNs are a promising solution when strong embodied navigation performance must be achieved with small models.

\begin{figure*}[t]
  \centering
  \includegraphics[width=0.9\linewidth]{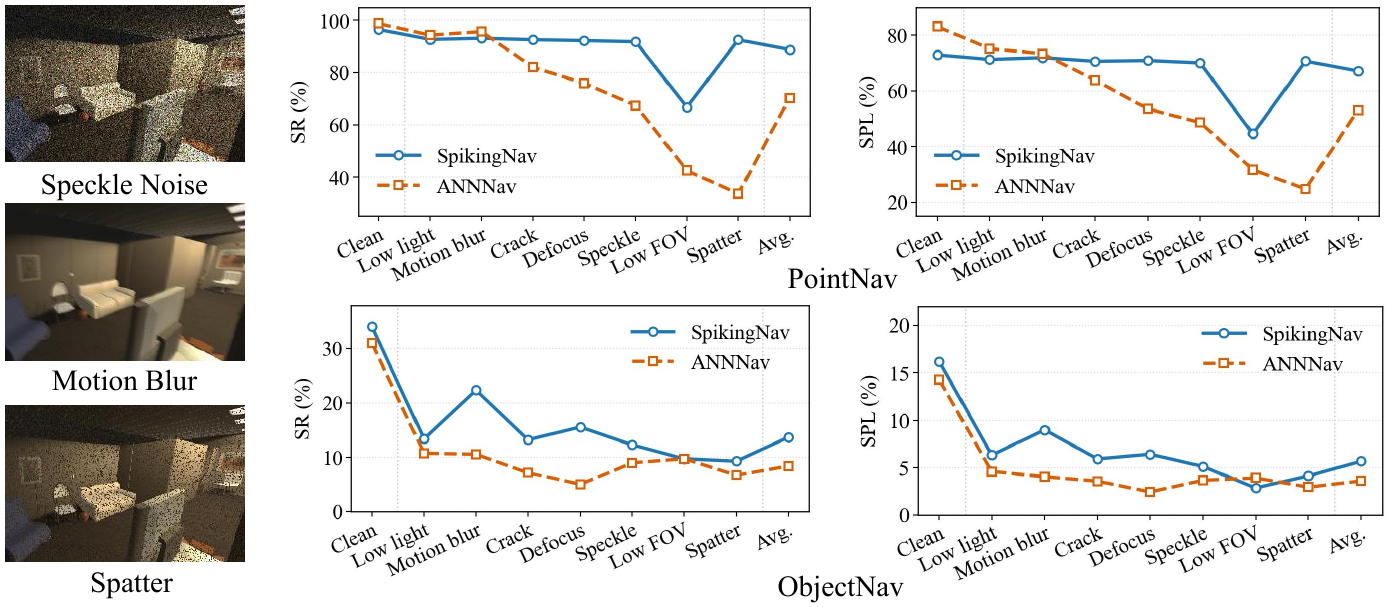}
  \caption{Robustness comparison under visual corruptions on PointNav and ObjectNav. The left column presents representative corrupted observations, including speckle noise, motion blur, and spatter. The right panels compare the SR and SPL of SpikingNav and ANNNav across different visual corruptions. ``Avg.'' denotes the average over the seven corrupted settings, excluding the clean setting.}
  \label{fig:fig2}
\end{figure*}

%====
\begin{table}[t]
  \centering
  \caption{PointNav and ObjectNav results on RoboTHOR.}
  \label{tab:full_results}
  \setlength{\tabcolsep}{2pt}
  \renewcommand{\arraystretch}{1.08}
  \begin{tabularx}{\columnwidth}{@{}>{\raggedright\arraybackslash}Xccc@{}}
    \toprule
    Method & Params & SR(\%) & SPL(\%) \\
    \midrule
    \multicolumn{4}{l}{\textit{PointNav}} \\
    \midrule
    IL RGB & 14M & 61.0 & 52.0 \\
    DD-PPO Depth \cite{allenact2020} & 14M & 93.0 & 88.0 \\
    RobustNav-RGB DD-PPO \cite{chattopadhyayRobustNavBenchmarkingRobustness2021}& 14M & 98.8 & 83.1 \\
    RobustNav-RGBD DD-PPO \cite{chattopadhyayRobustNavBenchmarkingRobustness2021} & 14M & 98.5 & 84.6 \\
    % ENTL RGB \cite{kotarENTLEmbodiedNavigation2023} & 30M & -- & 92.0 & 87.0 \\
    ENTL RGB+CLIP \cite{kotarENTLEmbodiedNavigation2023} & $>$69M & 95.0 & 91.0 \\
    \textbf{SpikingNav (Ours)} & \textbf{12.1M} & \textbf{96.5} & \textbf{72.9} \\ % 96.5 & 72.9 % & 97.6 & 76.2
    \midrule
    \multicolumn{4}{l}{\textit{ObjectNav}} \\
    \midrule
    LIF+GRU & 12.4M & 23.4 & 10.2 \\
    Action Boost \cite{batraRearrangementChallengeEmbodied2020}& / & 28.0 & 12.0 \\
    Dreamer \cite{hafnerDreamControlLearning2020}& / & 31.0 & 6.0 \\
    uORF \cite{yuUnsupervisedDiscoveryObject2022}& / & 31.3 & 14.6 \\
    RGB+D ResNet18 \cite{allenact2020}& 14M & 26.0 & 11.0 \\
    RobustNav-RGB DD-PPO \cite{chattopadhyayRobustNavBenchmarkingRobustness2021}& 14M & 31.1 & 14.3 \\
    % ENTL \cite{kotarENTLEmbodiedNavigation2023}& 10M & -- & 35.0 & 6.0 \\
    % NRC \cite{wallingfordNeuralRadianceField2023}& $\sim$20M & -- & 50.1 & 23.9 \\
    \midrule
    ENTL \cite{kotarENTLEmbodiedNavigation2023}& 96M & 46.0 & 16.0 \\
    EmbCLIP \cite{khandelwalSimpleEffectiveCLIP2022}& $\sim$40M & 47.0 & 20.0 \\
    SG-Nav-LLaMA \cite{yin2024sg}& $\sim$14B & 47.3 & 23.7 \\
    SG-Nav-GPT \cite{yin2024sg}& $>$180B & 47.5 & 24.0 \\
    VLTNet \cite{wen2025zero}& $>$100B & 33.2 & 17.1 \\
    \midrule
    \textbf{SpikingNav (Ours)} & \textbf{12.1M} & \textbf{34.1} & \textbf{16.2} \\
    \bottomrule
  \end{tabularx}
\end{table}

\begin{table}[t]
\caption{Model scale and clean performance. {PointNav and ObjectNav entries are reported as SR / SPL (\%). FLOPs denote the per-step forward computation measured under the same input resolution.}}
\label{tab:complexity}
\centering
\small
\begin{tabular}{lcccc}
\toprule
Method & Params & FLOPs & PointNav & ObjectNav \\
\midrule
ANNNav & 14M & 4.21G & 98.21 / 82.13 & 31.05 / 14.26 \\
SpikingNav & 12.1M & 0.97G & 96.54 / 72.93 & 34.12 / 16.20 \\
\bottomrule
\end{tabular}
\end{table}

\vspace{-0.3em}
\subsection{Navigation Robustness}

Fig.~\ref{fig:fig2} reports PointNav performance under clean conditions and a range of visual corruptions. Under clean conditions, ANNNav achieves the highest SR and SPL. However, under several visual corruptions, SpikingNav outperforms ANNNav by a substantial margin. These results suggest that SpikingNav achieves near-parity with ANNNav under clean conditions, while exhibiting stronger robustness.

Compared with PointNav, ObjectNav places substantially heavier demands on semantic perception and long-horizon state maintenance, since the agent must not only navigate but also identify and continuously pursue a target object from egocentric observations. As shown in {Fig.~\ref{fig:fig2}}, SpikingNav achieves a higher SR and SPL than ANNNav. Under most visual perturbations, this advantage is preserved or further enlarged, with especially clear gains under motion blur, camera crack, and defocus blur. Only under lower FOV does ANNNav remain slightly stronger. 

We provide qualitative navigation examples in Supplementary Material S2 to further illustrate how SpikingNav behaves under visual conditions and goal-driven navigation scenarios.

\vspace{-0.3em}
\subsection{Ablation Study}

\begin{table}[t]
\caption{Module-replacement ablation on ObjectNav. Starting from the ANNNav baseline, we replace its ANN perception module with the SSE, and its ANN policy module with the SPN. Results are reported as SR / SPL (\%). Robustness denotes the average performance over the visual corruptions.}
\label{tab:ablation}
\centering
\setlength{\tabcolsep}{3.0pt}
\renewcommand{\arraystretch}{1.08}
\begin{tabular}{lccccc}
\toprule
Variant & SSE & SPN & Clean & Robustness \\
\midrule
ANNNav baseline & -- & -- & 31.05 / 14.26 & 8.45 / 3.59 \\
SSE only & \checkmark & -- & 30.70 / 15.30 & 9.58 / 4.74 \\
SPN only & -- & \checkmark & 31.00 / 12.60 & 10.20 / 4.25 \\
\textbf{SpikingNav} & \checkmark & \checkmark & \textbf{34.12 / 16.20} & \textbf{13.71 / 5.67} \\
\bottomrule
\end{tabular}
\end{table}

To clarify which components contribute to the robustness gains, we add a controlled module ablation around the two design choices in SpikingNav: SSE and SPN.

\textbf{SSE.}
As shown in Table~\ref{tab:ablation}, the SSE-only variant replaces the ANN perception module in ANNNav with the SSE while keeping the ANN policy module unchanged. Relative to ANNNav, this replacement slightly lowers clean SR (30.70\% vs. 31.05\%), but improves clean SPL (15.30\% vs. 14.26\%) and average robustness under visual corruptions ({9.58\%/4.74\%} vs. 8.45\%/3.59\% in SR/SPL). This suggests that replacing the perception front-end with SSE mainly improves the stability of visual feature extraction, leading to more robust task-conditioned representations under corrupted observations even when the downstream policy remains ANN-based.

\textbf{SPN.}
Table~\ref{tab:ablation} also shows that the SPN-only variant keeps the ANN perception module and replaces only the ANN policy module with the SPN. Under this setting, clean SR remains nearly unchanged relative to ANNNav (31.00\% vs. 31.05\%), while robustness improves to 10.20\%/4.25\% in SR/SPL. This indicates that replacing the policy core with SPN can improve robustness even without modifying the visual front-end. At the same time, clean SPL decreases to 12.60\%, suggesting that the robustness gain introduced by spike-based recurrent policy dynamics may be accompanied by some loss in path efficiency under clean conditions.

\textbf{Full model.}
Finally, Table~\ref{tab:ablation} shows that replacing both ANN modules with their spike-based counterparts yields the full SpikingNav model, which achieves the best robustness result in the ablation study, reaching {13.71\%/5.67\%} in SR/SPL, compared with 8.45\%/3.59\% for ANNNav. It also attains the highest clean SR ({34.12\%}) with SPL ({16.20\%}). Overall, the ablation results suggest that the two replacements play complementary roles: SSE improves the robustness of the perceptual representation, whereas SPN improves the robustness of temporal policy updating. Their combination yields the strongest overall gain under corruptions.
% \begin{figure}[t]
%   \centering
%   \includegraphics[width=0.95\linewidth]{figures/visual_2_snn.pdf}
%   \caption{Qualitative results of SpikingNav on ObjectNav under speckle-noise corruption. 
% The upper row shows the egocentric observation and the top-down trajectory, with the blue curve denoting the agent's navigation path. 
% The lower rows show consecutive corrupted observations from the same episode.}
% \label{fig:qualitative_speckle}
%   \label{fig:qualitative}
% \end{figure}

% \subsection{Qualitative Results}

% We further provide an episode-level visualization to complement the quantitative robustness results. As shown in Fig.~\ref{fig:qualitative}, SpikingNav is evaluated on ObjectNav under speckle noise corruption. The top row shows the clean egocentric observation and the corresponding top-down trajectory, where the blue curve denotes the agent trajectory. The bottom row presents several corrupted egocentric observations sampled along the trajectory.
% Although the egocentric observations are severely degraded by speckle noise, SpikingNav still produces a continuous trajectory and maintains stable navigation behavior across consecutive steps. This qualitative result supports the quantitative robustness analysis by showing that the proposed spiking navigation policy can preserve coherent behavior under noisy visual inputs.

\vspace{-0.3em}
\subsection{Neuromorphic Hardware Deployment}
\label{sec:Hardware}

To examine the hardware deployability of SpikingNav \revise{as a physical computing system}, we mapped its SSE onto Thruster--V2%
\footnote{The hardware implementation work of Thruster--V2 has not been published yet.\label{foot:gaban_name}}. 
It is a multi-core neuromorphic processor derived from the GaBAN architecture~\cite{chen2022gaban} and has been taped-out. 
Following the memory prefetching and buffering design of GaBAN, Thruster--V2 supports Buffets~\cite{pellauer2019buffets} prefetching and asynchronous buffering, which helps reduce the impact of memory-access latency in SNN workloads. The deployed SSE includes the spike-based visual backbone, target-token encoding, and goal-conditioned feature fusion, which together form the sensing \revise{front} end of SpikingNav. 
The deployment was executed on one GaBAN processing core under a \(5\,\mathrm{V}\) supply voltage. 
Cycle counts were converted using a \(625\,\mathrm{MHz}\) cycle-counter frequency. Each run used an input batch with the same scale as that used in the simulator evaluation, containing \(128\) sampled inputs. Each input consists of one visual observation and the corresponding target category token. The reported per-sample cycles and energy were averaged from the batch-level measurements. 
We repeated the measurement five times to reduce the influence of operational variation. The deployed spiking sensing module required \((2.133 \pm 0.048)\times10^{9}\) cycles per sample and the measured dynamic energy consumption was \(3.92 \pm 0.14\) J per sample. These results verify that the sensing and target-fusion computation of SpikingNav can be mapped to and executed on a physical neuromorphic hardware platform, providing an initial hardware-level validation of the proposed framework. The current implementation prioritizes functional compatibility and programmability, while further hardware-aware sparse execution and event-driven optimization of SpikingNav may reduce the deployment cost.

\begin{table}[t]
    \centering
    \caption{On-chip execution statistics of the deployed spiking sensing module on the Thruster--V2 neuromorphic chip.}
    \label{tab:chip_deployment}
    \setlength{\tabcolsep}{4.5pt}
    \renewcommand{\arraystretch}{1.12}
    \begin{tabular}{lc}
        \toprule
        \textbf{Deployment setting or metric} & \textbf{Result} \\
        \midrule
        Supply voltage & \(5\) V \\
        Chip frequency & \(625\) MHz \\
        Cycles per sample & \(2.133 \pm 0.048\) G~cycles \\
        Dynamic current & \(229.5 \pm 4.7\) mA \\
        Dynamic energy per sample & \(3.92 \pm 0.14\) J \\
        \bottomrule
    \end{tabular}
\end{table}

\begin{figure}[t]
    \centering
    \includegraphics[width=0.95\linewidth]{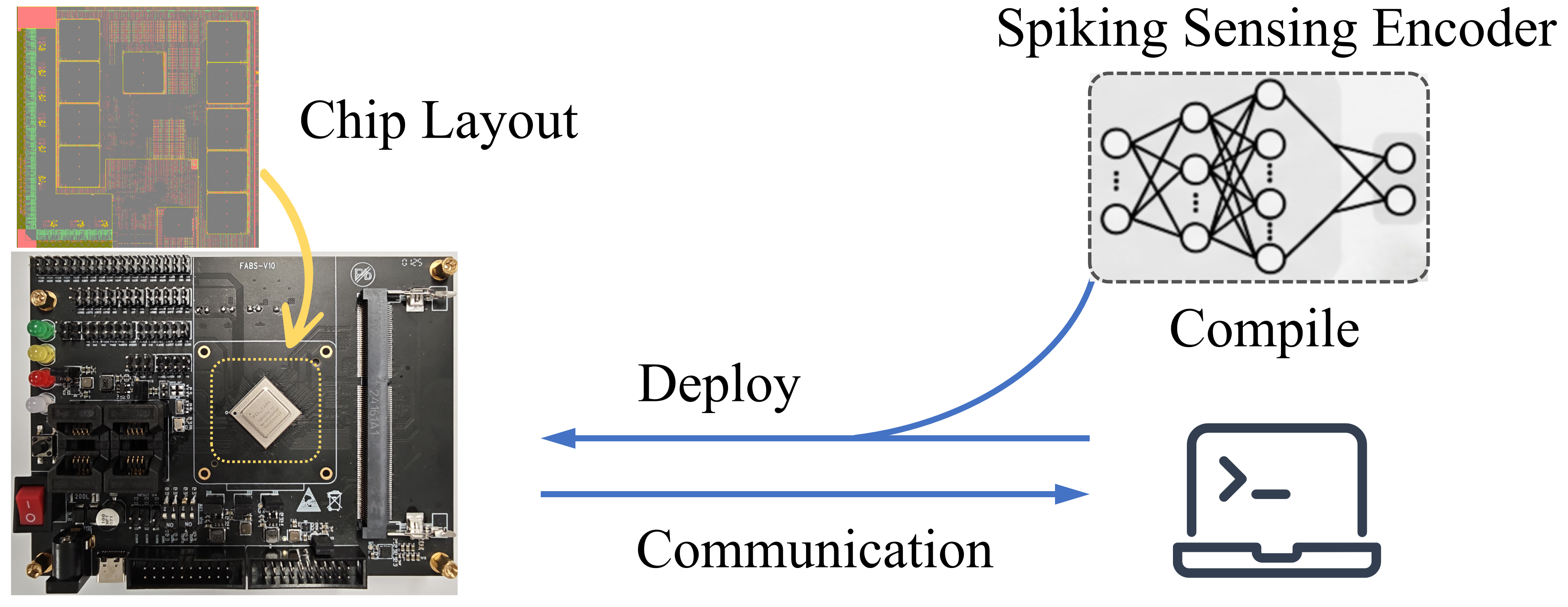}
    \caption{Deployment process on the Thruster--V2\textsuperscript{\ref{foot:gaban_name}}. The right subfigure shows the layout of the taped-out Thruster--V2 multi-core neuromorphic processor, \revise{which provides the physical hardware basis for the SpikingNav}.}
    \label{fig:placeholder}
\end{figure}

% \vspace{-0.4em}
\section{Analysis and Discussion}
In static vision tasks, such as image recognition on ImageNet~\cite{imagenet}, SNN backbones often struggle to surpass the performance of their ANN counterparts. Furthermore, while SNNs demonstrate potential advantages in adversarial robustness, recent studies suggest that their resilience to real-world image corruptions remains inferior to that of corresponding ANNs~\cite{mpsn, zhang2026burst}. To provide a comprehensive evaluation of both clean accuracy and robustness, we compared the SNN backbone used in SpikingNav with the ResNet backbone from ANNNav using the ImageNet-C dataset~\cite{hendrycks2019benchmarking}. This benchmark encompasses a wide range of corruptions, such as noise, blur, and lighting variations, mirroring the disturbances found in the RobustNav benchmark.

As shown in Table~\ref{tab:backbone_recognition}, the SNN encoder achieves nearly the same clean recognition accuracy as the ANN encoder, but obtains a lower corruption average (31.32\% vs. 34.41\%). In contrast, the navigation results show a different trend. SpikingNav improves clean ObjectNav SR from 31.05\% to 34.12\%, and increases corrupted ObjectNav SR from 8.45\% to 13.71\%. This indicates that embodied navigation does not simply inherit the single-frame recognition ranking of the visual backbone.

To quantify this difference, we compute a retention ratio $R_{\mathrm{ret}}=\frac{P_{\mathrm{corrupt}}}{P_{\mathrm{clean}}}$,
where $P_{\mathrm{clean}}$ and $P_{\mathrm{corrupt}}$ denote the clean and corruption-averaged performance under the same evaluation protocol. For recognition, $P$ denotes classification accuracy; for navigation, $P$ denotes ObjectNav SR. As summarized in Fig.~\ref{fig:recognition_navigation_retention}, the ANN encoder retains 49.33\% of its clean recognition accuracy under corruptions, while the SNN encoder retains 44.82\%. In contrast, ANNNav retains only 27.21\% of its clean ObjectNav SR, whereas SpikingNav retains 40.18\%. Therefore, although the SNN encoder is not more robust as an isolated frame classifier, the full spiking navigation system preserves a much larger fraction of task success under corrupted observations.

\begin{table}[t]
\caption{Comparison between backbone recognition and embodied navigation under clean and corrupted visual observations. Recognition performance is evaluated using the ANN or SNN visual encoder as a static image classifier.}
\label{tab:backbone_recognition}
\centering
\setlength{\tabcolsep}{3.5pt}
\renewcommand{\arraystretch}{1.08}
\begin{tabular}{lcccc}
\toprule
\multirow{2}{*}{Model} & \multicolumn{2}{c}{Recognition} & \multicolumn{2}{c}{Navigation} \\
\cmidrule(lr){2-3}\cmidrule(lr){4-5}
& Clean Acc. & Corrupt Avg. & Clean SR & Corrupt SR \\
\midrule
ANNNav & 69.76\% & 34.41\% & 31.05\% & 8.45\% \\
SpikingNav & 69.89\% & 31.32\% & 34.12\% & 13.71\% \\
\bottomrule
\end{tabular}
\end{table}

\begin{figure}[t]
\centering
\definecolor{ieeeblue}{RGB}{0,114,178}
\definecolor{ieeeorange}{RGB}{230,159,0}
\begin{tikzpicture}
\begin{axis}[
    width=0.98\columnwidth,
    height=4.0cm,
    ylabel={Retention ratio (\%)},
    symbolic x coords={Recognition,Navigation},
    xtick=data,
    ymin=0,
    ymax=60,
    ytick={0,15,30,45,60},
    tick align=inside,
    axis line style={black, line width=0.45pt},
    tick style={black, line width=0.45pt},
    yticklabel style={font=\scriptsize},
    xticklabel style={font=\scriptsize},
    label style={font=\scriptsize},
    legend style={
        font=\scriptsize,
        at={(0.5,1.03)},
        anchor=south,
        legend columns=2,
        draw=none,
        fill=none,
        /tikz/every even column/.append style={column sep=0.15cm}
    },
    grid=major,
    grid style={gray!25, line width=0.25pt},
    major grid style={gray!25, line width=0.25pt},
]
\addplot[
    color=ieeeblue,
    line width=0.9pt,
    mark=*,
    mark size=1.8pt
] coordinates {(Recognition,49.33) (Navigation,27.21)};

\addplot[
    color=ieeeorange,
    line width=0.9pt,
    mark=square*,
    mark size=1.8pt
] coordinates {(Recognition,44.82) (Navigation,40.18)};

\legend{ANNNav, SpikingNav}
\end{axis}
\end{tikzpicture}
\caption{Retention ratios of ANNNav and SpikingNav under visual corruptions. 
% The blue line with circle markers denotes ANNNav, and the orange line with square markers denotes SpikingNav.
Recognition denotes the retention ratio of ImageNet-C classification accuracy, while Navigation denotes the retention ratio of ObjectNav success rate.}
\label{fig:recognition_navigation_retention}
\end{figure}

This discrepancy reflects the difference between static recognition and embodied navigation. Static recognition evaluates whether a corrupted frame can still map to the correct category. Embodied navigation instead evaluates whether an agent can maintain effective action decisions while observations, internal states, and environmental feedback evolve over time. Thus, a corrupted observation is not merely a perception error; it becomes a perturbation to a closed-loop decision process. Conversely, a moderately degraded visual feature does not necessarily cause task failure if the policy state can absorb the perturbation and preserve action-relevant information.

From this perspective, the advantage of SpikingNav does not lie in producing universally more robust static visual features. Rather, it appears when spike-based perception is coupled with spike-based policy dynamics. The SSE converts visual evidence into a compact task-conditioned representation, while the SPN further regulates this representation through membrane accumulation, thresholding, and spike-triggered reset. Small feature fluctuations can be integrated without immediately changing the spike pattern, and transient responses can be suppressed after firing. This finding aligns with Proposition~\ref{pro1}, as the spike pattern remains intact when the perturbation-induced membrane change falls below the local spike margin. Therefore, the robustness gain mainly arises from temporal state evolution rather than isolated visual recognition.

The module ablation further supports this interpretation. The SSE-only variant improves corrupted ObjectNav SR from 8.45\% to 9.58\%, indicating that spike-based perception provides a more stable task-conditioned interface. The SPN-only variant increases corrupted SR to 10.20\%, showing that spike-based recurrent policy dynamics can improve robustness even with an ANN visual encoder. The full model achieves 13.71\%, outperforming either partial replacement. This complementarity suggests that robust embodied behavior requires both a corruption-tolerant perceptual interface and a policy state that can temporally regulate noisy evidence.

These findings clarify why embodied navigation provides a meaningful setting for evaluating SNNs. 
The benefit of SpikingNav is not fully reflected by static recognition accuracy, but emerges when spiking perception and spiking policy dynamics are integrated in a closed-loop agent. 
By accumulating, gating, and resetting sensory evidence over time, spiking dynamics provide a task-relevant decision substrate for embodied agents.

% Beyond simulation-based evaluation, we further conduct a preliminary neuromorphic hardware deployment to examine the practical executability of the proposed spike-driven sensing design. 
% As detailed in the supplementary material S1, we deploy the spiking sensing module, including the visual perception backbone and goal-conditioned feature fusion, on GaBAN, a programmable FPGA-based vector neuro-processor for SNN workloads~\cite{chen2022gaban}. 
% This deployment is intended as a module-level hardware feasibility study. The measured results show that the sensing front end of SpikingNav can be mapped to and executed on physical neuromorphic hardware, providing an initial step toward practical hardware deployment.

% \vspace{-0.3em}
\subsection{Limitations and Future Work}

The present study demonstrates the feasibility of incorporating SNNs into a modern embodied navigation pipeline and highlights their performance and robustness under visual corruptions. Several limitations remain for future research.

First, our evaluation is mainly conducted in high-fidelity simulation environments. Although these simulations include realistic sensor noise, the current hardware validation only covers the spiking sensing module and does not yet include the complete closed-loop navigation agent. Future work will extend SpikingNav toward full neuromorphic deployment and hardware-aware sparse execution, enabling a more comprehensive evaluation of practical latency and energy efficiency in embodied systems.
% Although these simulations include realistic sensor noise, SpikingNav has not yet been deployed on physical neuromorphic hardware. 
% In addition, while low-power and event-driven execution are important motivations for SNNs~\cite{roy2019towards}, this work does not provide a dedicated hardware-level analysis of energy efficiency or latency on platforms such as Loihi or Tianjic~\cite{davies2018loihi,pei2019towards}. Future work will extend SpikingNav to neuromorphic deployment and evaluate its practical efficiency in embodied systems.

Second, this work focuses on compact models designed for edge-oriented agents. This setting is aligned with our target scenarios, but it also limits the generality of our conclusions to larger architectures. It remains unclear whether the benefits of discrete activations and neuron-level temporal dynamics persist in large-scale Vision-Language-Action models or more complex long-horizon multimodal tasks. Exploring spike-based navigation in larger embodied intelligence systems is therefore an important future direction.

\section{Conclusion}

This paper presented SpikingNav, a spiking embodied navigation framework that extends SNNs from perceptual encoding to navigation policy learning in visually rich indoor environments \revise{and targets robust navigation in cyber-physical systems}. By integrating an SSE and an SPN into a standard embodied RL pipeline, SpikingNav enables temporally grounded state construction and action generation with spike-based dynamics. Experimental results on PointNav and ObjectNav showed that SpikingNav achieves competitive clean-task performance while exhibiting stronger robustness under visual corruptions, with fewer parameters and lower per-step computation than the ANN counterpart. \revise{The deployment of the SSE on Thruster--V2 further shows that the sensing component can run on a real neuromorphic chip, linking the cyber navigation algorithm to a physical hardware substrate.} These findings suggest that the practical value of SNNs in embodied navigation lies in supporting compact and disturbance-tolerant sensing and policy modules for resource-constrained embodied systems.

% \section*{Acknowledgment}

% This work was partially supported by National Distinguished Young Scholars (62325603), National Natural Science Foundation of China (62236009, U22A20103),  CAS Project for Young Scientists in Basic Research (YSBR-116), Beijing Science and Technology Plan (Z241100004224011).

% \section*{Data Availability}

% The dataset and code will be released upon publication.
% \vspace{-0.3em}
\bibliographystyle{IEEEtran}
\bibliography{references}

\end{document}